\documentclass[twoside,11pt]{article}
\usepackage[preprint]{jmlr2e}
\usepackage{lastpage}
\ShortHeadings{Calibration-then-Calculation: A Variance Reduced Metric
Framework}{Fan, Si, Song, and Zhang}
\firstpageno{1}


\usepackage[utf8]{inputenc} %
\usepackage[T1]{fontenc}    %
\usepackage{hyperref}       %
\usepackage{url}            %
\usepackage{booktabs}       %
\usepackage{amsfonts}       %
\usepackage{nicefrac}       %
\usepackage{microtype}      %
\usepackage{xcolor}         %
\usepackage{wrapfig}
\usepackage{tikz}
\usetikzlibrary{arrows,positioning}

\usepackage{nccmath}
\usepackage{amsmath}
\usepackage{amssymb}
\usepackage{bm}
\usepackage{bbm}

\usepackage{stackengine}
\usepackage{subcaption}
\usepackage{paralist}
\usepackage{tabularx}
\usepackage{scalerel,graphicx,xparse,textcomp}
\usepackage{mathtools}

\usepackage{etoolbox}
\usepackage[capitalize,noabbrev]{cleveref}

\usepackage{enumitem}
\usepackage{multirow}
\usepackage{changepage} %
\usepackage{adjustbox} %
\usepackage{makecell}
\usepackage{lipsum}
\usepackage{cancel}
\usepackage{algorithm}
\usepackage{algorithmic}
\usepackage{multicol}
\usepackage{pifont}
\usepackage{stmaryrd}
\usepackage{trimclip}
\usepackage{tikzducks}
\usepackage[normalem]{ulem}
\usepackage{setspace}

\usepackage{float}
\usepackage{verbatim}
\usepackage{array}
\usepackage{xspace}

\graphicspath{{pic/}}

\begin{document}
\title{Calibration-then-Calculation: A Variance Reduced Metric Framework in Deep Click-Through Rate Prediction Models}

   
\author{
Yewen Fan$^{1}$ \quad Nian Si$^{3}$ \quad
Xiangchen Song$^{1}$ \quad Kun Zhang$^{12}$
\\ \\
$^1$Carnegie Mellon University\\
$^2$Mohamed bin Zayed University of Artificial Intelligence\\
$^3$University of Chicago Booth School of Business\\
 }

\editor{My editor}
\maketitle

\begin{abstract}
The adoption of deep learning across various fields has been extensive, yet there is a lack of focus on evaluating the performance of deep learning pipelines. Typically, with the increased use of large datasets and complex models, the training process is run only once and the result is compared to previous benchmarks. This practice can lead to imprecise comparisons due to the variance in neural network evaluation metrics, which stems from the inherent randomness in the training process. Traditional solutions, such as running the training process multiple times, are often infeasible due to computational constraints. In this paper, we introduce a novel metric framework, the Calibrated Loss Metric, designed to address this issue by reducing the variance present in its conventional counterpart. Consequently, this new metric enhances the accuracy in detecting effective modeling improvements. Our approach is substantiated by theoretical justifications and extensive experimental validations within the context of Deep Click-Through Rate Prediction Models.

\end{abstract}


\section{Introduction}\label{sec:introduction}

The progress in machine learning is largely influenced by experimental outcomes, particularly in the era of deep learning. Researchers often evaluate the performance of new methods by comparing them to previous benchmark results to demonstrate the superiority of new methods. However, it is well known that the performance of deep learning models can vary greatly, even when using the same pipeline \citep{seedallyouneed}, where, in this work, we define the pipeline broadly, which includes but is not limited to the selection of feature sets,  model architectures, optimization algorithms, initialization schemes, and hyper-parameters. Two identical pipelines may produce substantially different validation metrics due to factors like random initialization, shuffling, and optimization noise. This variability makes it difficult to precisely compare modeling improvements over previous baselines. Even significant engineering efforts may only lead to small measured gains within the noise margin.

In fact, it has been shown that by selecting a fortunate random initialization seed, one can achieve a model performance that is significantly better than average \citep{seedallyouneed}. This difference can be substantial enough to be used as a strong argument for publications in selective venues \citep{seedallyouneed}.

This issue is exacerbated in industry where the production model performance is hard to get improvement, while there are hundreds of machine learning engineers working on the same model at the same time. The performance gain of a modeling proposal is usually small and within the metric variance, making it difficult to judge the effectiveness of the modeling proposal.

To address this issue, a common approach is to run the training pipeline multiple times and report the average, standard deviation, minimum, and maximum performance scores \citep{seedallyouneed}. However, with the rise of large training data and big models, this approach is not always practical due to limited computational resources \citep{bouthillier2020survey}.

In this work, we present a new perspective focusing on metric design. The key insight is that we can reduce the variance of evaluation metrics themselves to enable more accurate comparisons between models. We propose a metric framework called Calibrated Loss Metric that exhibits lower variance than standard losses like Log Loss. Our method works by correcting inherent biases before finalizing metric calculations on a holdout set. 

We provide theoretical justifications for our metric under simplified linear models. More importantly, we demonstrate the effectiveness empirically across diverse deep learning pipelines for CTR prediction. Experiments on real-world advertising data show that our Calibrated Log Loss Metric reliably detects superior models compared to alternatives. We further validate the benefits under different training configurations. Our results suggest that our method can significantly improve the efficiency of benchmarking and optimizing neural recommendation models.

In summary, this paper makes the following contributions:
\begin{itemize}
\item We formulate the deep learning pipeline evaluation problem and propose to tackle it by designing new metrics. 

\item We propose a new metric framework, Calibrated Loss Metric, which can mitigate the above deep learning pipeline evaluation issue. 

\item We conduct extensive experiments to demonstrate the effectiveness of the proposed metric, using synthetic dataset and ads click dataset. 

\item We provide theoretical guarantees under linear regression setting that the proposed metric has a smaller variance than its vanilla counterpart.
\end{itemize}

The remainder of the paper is organized as follows. We first present the problem formulation and metric design. We then provide theoretical justifications and experiment results. Finally, we survey related literature and offer concluding remarks.

\section{Preliminaries and Problem Setting}

In this work, we are examining the standard supervised learning setting, where we assume that the training data and test data are randomly drawn from an unknown distribution in an i.i.d. manner, denoted as $\mathcal{D}$. 


Our goal is to develop a good pipeline that maps from a training distribution to a possibly random model, $h \in \mathcal{H}$, that generalizes well during the test time. As we mentioned in Introduction, the pipeline includes the whole procedures of training a model, including the selection of  model architectures, optimization algorithms, initialization schemes, and hyper-parameters.   Model performance is evaluated by a metric, $e$, and thus the expected performance of a model $h$ during the test time is 
\begin{equation}
    R_e(h) = \mathbb{E}_\mathcal{D}[e(h(X),Y)|h].
\end{equation}

In practice, $R_e(h)$ is estimated by finite-sample average on the test dataset $\mathcal{\hat{D}}_{\mathrm{test}}$. That is, 
\begin{equation}
\hat{R}_e(h, \mathcal{\hat{D}}_{\mathrm{test}}) = \frac{1}{|\mathcal{\hat{D}}_{\mathrm{test}}|}\sum_{(x, y) \in \mathcal{\hat{D}}_{\mathrm{test}}} e(h(x), y).
\end{equation}

It's worth noting that the expected risk, $R_e(h)$, is a random variable, as $h$ is random and it depends on a specific model that is produced by the underlying deep learning pipeline. The output model is random due to data randomness from the sample collection and intrinsic randomness during the training process in the deep learning pipeline, such as data order and randomness from stochastic gradient descent. Therefore, a proper evaluation and comparison of different deep learning pipelines should take into account the distribution of $R_e(h)$ \citep{bouthillier2021accounting}. It's also important to note that the term "deep learning pipeline" in this context is general, as we consider different model configurations (e.g. different model hyperparameters) as different "deep learning pipelines", even though they may belong to the same model class.

To compare the performance of different deep learning pipelines, we should calculate the expected risk ${R}_e(h)$ for each pipeline. As mentioned above, such expected risk is a random variable w.r.t $h$, then we should compare the distribution of ${R}_e(h)$, specifically we use the probability of the event that the expected risk ${R}_e(h)$ for one pipeline is larger or smaller than the other to quantify the pair-wise performance comparison among different pipelines. 
\begin{definition}
\textit{(Better pipeline)} For any two pipelines $A$ and $B$, we say that pipeline $A$ is better than pipeline $B$ with respect to metric $e$ if and only if the probability that pipeline $A$ produces a better model (i.e. smaller risk), measured by the metric $e$, is larger than $0.5$. This is represented by the inequality: 
\begin{equation}
\label{ineq:def:better}
    P\left({R}_e(h_A) < {R}_e(h_B)\right) > 0.5,
\end{equation}
where $h_A$ and $h_B$ are random variables representing the output models produced by pipeline $A$ and $B$ respectively.
\end{definition}

Our objective is to compare the performance of two pipelines, $A$ and $B$, with respect to the metric $e$ by running the training pipeline only once. Ideally, to estimate $P({R}_e(h_A) < {R}_e(h_B))$, we could use the Monte Carlo method, but this requires a huge amount of computational resources which is not realistic.
In this work, we propose a different approach: we aim to come up with an alternative metric $e_1$ with the following properties:
\begin{enumerate}
    \item \textit{Roughly same mean} $$\mathbb{E}\left[{R}_{e_1}(h)\right] \approx \mathbb{E}\left[{R}_e(h)\right];$$
    \item \textit{Strictly small variance} $$var\left({R}_{e_1}(h)\right) < var\left({R}_e(h)\right),$$
\end{enumerate}
where the randomness is from the pipeline producing $h$.
As a result, the new metric is able to compare the performance of pipelines $A$ and $B$ with limited computational resources more accurately.
\begin{definition}
\textit{(Better alternative metric)} Assuming that pipeline $A$ is better than $B$ with respect to the metric $e$ (i.e. pipeline $A$ is more likely to produce a better model than pipeline $B$ in the ground truth if measured by metric $e$), we say that a metric $e_1$ is better than $e_2$ with respect to metric $e$ if and only if the probability that pipeline $A$ produces a better model than pipeline $B$ measured by metric $e_1$ is greater than the probability measured by metric $e_2$. This is represented by the inequality: 
\begin{equation}
\label{ineq:comparison}
P({R}_{e_1}(h_A) < {R}_{e_1}(h_B)) > 
P({R}_{e_2}(h_A) < {R}_{e_2}(h_B)).
\end{equation}
\end{definition}
In other words, using metric $e_1$ is more likely to accurately detect that pipeline $A$ is better than pipeline $B$, which aligns with the ground truth. Here, we allow for a general form of the risk function, which may not admit the expectation form; i.e.,  ${R}_{e_1}(h)$ may not necessarily have the form $\mathbb{E}_\mathcal{D}[e_1(h(X),Y)]$.

\begin{definition}

\textit{(Metric accuracy)} Here, we assume without loss of generality that pipeline $A$ is better than $B$ with respect to the metric $e$. We define the accuracy of a metric $\bar{e}$ with respect to metric $e$ and pipeline A and B as:
 \begin{equation}
    \mathrm{Acc}(\bar{e}) \triangleq P({R}_{\bar{e}}(h_A) < {R}_{\bar{e}}(h_B)).
\end{equation}
\end{definition}

Our goal is to find a metric $\bar{e}$ that has higher accuracy than the original metric $e$ for a wide range of pipelines $A$ and $B$. In the next section, we will present a new metric framework, Calibrated Loss Metric. In the context of Deep Click-Through Rate Prediction Models, a special instance of Calibrated Loss Metric, Calibrated Log Loss Metric achieves higher accuracy than the vanilla Log Loss Metric. The intuition is that the bias in the function $h$ is always volatile and carries on a large amount of randomness.  Calibrating the bias will usually not change the comparison between two pipelines but can reduce the randomness. In Section \ref{section:theory}, we will present a theoretical analysis that justifies this intuition by showing that our new metric framework has a smaller variance under the linear regression setting. Through extensive experiments in Section \ref{section:experiment}, we will show that Calibrated Log Loss Metric achieves higher accuracy than Log Loss Metric for a wide range of pipelines, including those with different batch sizes, number of features, model architectures, regularization weights, model sizes, etc.

\section{Calibrated Loss Metric Framework}

\textbf{Calibrated Log Loss Metric (Binary Classification)} In the field of Deep Click-Through Rate Prediction Models, it is common for models to overfit when trained for more than one epoch \citep{zhou2018deep, zhang2022towards}. As a result, models are often only trained for a single epoch in practice \citep{zhang2022towards}, leaving it uncertain if the model has been fully optimized. This leads to the volatility of the bias term in the final layer of neural networks, creating additional randomness. To address this issue, we propose the following risk function:
\begin{align*}
 R_{e_1}(h) = \min_c \mathbb{E}_{\mathcal{D}} [Y\log(h^c(X))+(1-Y)\log(1-h^c(X))|h],
\end{align*}
where
\begin{align*}
 h^c(X)= ({1 + e^{-\mathrm{logit}(h(X)) + c}})^{-1},
\end{align*}
\begin{align*}
\mathrm{logit}(p) = \log(\frac{p}{1-p}).
\end{align*}

To execute the aforementioned procedure on a finite-sample test set, we split the test data $\mathcal{\hat{D}}_{\mathrm{test}}$ into two parts: a test-validation dataset $\mathcal{\hat{D}}_{\mathrm{val-test}}$ and a remaining test dataset $\mathcal{\hat{D}}_{\mathrm{remaining-test}}$. By using the test-validation dataset $\mathcal{\hat{D}}_{\mathrm{val-test}}$, we are able to correct the bias term, and then calculate the Log Loss Metric using $\mathcal{\hat{D}}_{\mathrm{remaining-test}}$ with the bias-adjusted predictions. This is referred to as Calibrated Log Loss Metric. The calculation procedure is outlined in Algorithm \ref{algo:cali_free}.

Mathematically speaking, we define bias-adjusted predictions as: $q_i=h^c(x_i)$ for $x_i$ in the test set
where $c$ is the bias-adjusted term we are optimizing. 

To optimize $c$, the following optimization program is solved, which is the log loss between bias-adjusted predictions $q_i$ and labels $y_i$: 
\begin{equation}
\label{eq:optimization_problem}
    \min_{c}\left\{ -\sum_{\substack{ (x,y) \in \\ \mathcal{\hat{D}}_{\mathrm{val-test}}}} \left(y \log(h^c(x)) + (1-y) \log(1-h^c(x)) \right) \right\}.
\end{equation}

It can be easily shown that, after optimization, the bias-adjusted predictions $q_i$ are well-calibrated in $\mathcal{\hat{D}}_{\mathrm{val-test}}$, meaning that 
$$\sum_{i \in \mathcal{\hat{D}}_{\mathrm{val-test}}} q_i = \sum_{i \in \mathcal{\hat{D}}_{\mathrm{val-test}}} y_i.$$

 Let $c^*$ be the minimizer of the optimization problem (\ref{eq:optimization_problem}). The final risk and metrics are
 \begin{align*}
 \hat{R}_{e_1}(h, \mathcal{\hat{D}}_{\mathrm{test}}) = \frac{1}{|\mathcal{\hat{D}}_{\mathrm{remaining-test}}|}\sum_{\substack{(x, y) \in  \\ \mathcal{\hat{D}}_{\mathrm{remaining-test}}}} e_1(h(x), y),
 \end{align*}
and 
$$
e_1(h(x), y)=y \log(h^{c^*}(x)) + (1-y) \log(1-h^{c^*}(x)).  
$$

\textbf{Explanations} The optimization problem (\ref{eq:optimization_problem}) corrects the bias term of original predictions $h(x)$ using test-validation dataset $\mathcal{\hat{D}}_{\mathrm{val-test}}$. The bias-adjusted predictions $h^{c^*}(x)$ is guaranteed to be well-calibrated in $\mathcal{\hat{D}}_{\mathrm{val-test}}$, hence the name Calibrated Log Loss Metric.

\begin{algorithm}[ht]
\caption{Calculate Calibrated Log Loss Metric}
\label{algo:cali_free}

\begin{algorithmic}[1]
\STATE  \textbf{Input:} Model $M$, labeled test data $\mathcal{\hat{D}}_\mathrm{test}$.
\STATE \textbf{Output:} Calibrated Log Loss Metric: $\hat{R}_{e_1}(h, \mathcal{\hat{D}}_{\mathrm{test}})$.

\STATE Partition $\mathcal{\hat{D}}_\mathrm{test}$ into $\mathcal{\hat{D}}_\mathrm{val-test}$ and $\mathcal{\hat{D}}_\mathrm{remaining-test}$.
\STATE Compute model predictions on $\mathcal{\hat{D}}_\mathrm{val-test}$ and $\mathcal{\hat{D}}_\mathrm{remaining-test}$, and obtain the model predictions $p^\mathrm{val-test}_i$ and $p^\mathrm{remaining-test}_i$.
\STATE Solve the optimization problem (\ref{eq:optimization_problem}) using $p^\mathrm{val-test}_i$ and $y^\mathrm{val-test}_i$ and obtain the learned bias term $c^*$.
\STATE Calculate bias-adjusted predictions $q^\mathrm{remaining-test}_i$ using formula $q_i=h^{c^*}(x_i)$ . 
\STATE Calculate the Calibrated Log Loss Metric $ \hat{R}_{e_1}(h, \mathcal{\hat{D}}_{\mathrm{test}})$ as the Log Loss Metric of $q^\mathrm{remaining-test}_i$ and $y^\mathrm{remaining-test}_i$.

\end{algorithmic}
\end{algorithm}

\textbf{Generalization to Quadratic Loss Metric}  Calibrated Quadratic Loss Metric is calculated in a similar manner as Calibrated Log Loss Metric, i.e. first perform calibration on $\mathcal{\hat{D}}_{\mathrm{val-test}}$ and calculate bias-adjusted predictions on $\mathcal{\hat{D}}_{\mathrm{remaining-test}}$. Here, we define the Quadratic Loss Metric and Calibrated Quadratic Loss Metric:
\[
e(h(x),y)=(y-h(x))^2, \text{ and}
\]
\begin{equation*}
e_1(h(x),y)=(y-h(x)-(\mathbb{E}_\mathcal{D}[Y]-\mathbb{E}_\mathcal{D}[h(X)|h]))^2.
\end{equation*}

\section{Theory on Linear Regression}
\label{section:theory}

In this section, we provide theoretical justification that our new metric has a smaller variance than its vanilla counterpart under Linear Regression setting, where the randomness only comes from the data randomness. We choose to provide a theoretical guarantee under Linear Regression due to its simplicity. We empirically verify our method's performance under Logistic Regression and Neural Networks in the next section. Note that in Linear Regression, Quadratic Loss Metric is used instead of Log Loss Metric. As a result, in our theory, we compare the variance of Calibrated Quadratic Loss Metric with vanilla Quadratic Loss Metric.

\begin{theorem}
\label{theorem}
Suppose that the features $X\in \mathbb{R}^d$ and the label $Y$ are distributed jointly Gaussian. We consider linear regression $h(x) = \beta ^\top x +\alpha $. Let $\hat{\beta}_n$ be the coefficient learned from the training data with sample size $n$. 
Then, we have 
\begin{equation*}
\left(1+\frac{1}{n}\right)\mathbb{E}[e_1(h(X),Y)|\hat{\beta}_n ]= \mathbb{E}[e(h(X),Y)|\hat{\beta}_n],
\end{equation*}
where the expectation is taken over the randomness over both the training and test samples.
\end{theorem}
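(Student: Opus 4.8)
The plan is to evaluate both conditional expectations in closed form and compare. Write $\mu_X=\mathbb{E}_\mathcal{D}[X]$, $\mu_Y=\mathbb{E}_\mathcal{D}[Y]$, $\Sigma_{XX}=\mathrm{Cov}(X)$, $\Sigma_{XY}=\mathrm{Cov}(X,Y)$, $\sigma_Y^2=\mathrm{Var}(Y)$, and center the variables, $\tilde X=X-\mu_X$ and $\tilde Y=Y-\mu_Y$. Take the fitted model to be $h(x)=\hat\beta_n^\top x+\hat\alpha_n$ with the ordinary least squares intercept $\hat\alpha_n=\bar Y-\hat\beta_n^\top\bar X$, where $\bar X,\bar Y$ denote training sample means. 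The first step simplifies $e_1$: the population calibration constant is $\mathbb{E}_\mathcal{D}[Y]-\mathbb{E}_\mathcal{D}[h(X)\mid h]=\mu_Y-\hat\beta_n^\top\mu_X-\hat\alpha_n$, so the intercept cancels and $Y-h(X)-(\mathbb{E}_\mathcal{D}[Y]-\mathbb{E}_\mathcal{D}[h(X)\mid h])=\tilde Y-\hat\beta_n^\top\tilde X$. Since the test point $(X,Y)$ is drawn independently of the training sample and hence of $\hat\beta_n$, conditioning on $\hat\beta_n$ leaves $(X,Y)\sim\mathcal{D}$, so
\[
\mathbb{E}[e_1(h(X),Y)\mid\hat\beta_n]=\mathbb{E}_\mathcal{D}\!\left[(\tilde Y-\hat\beta_n^\top\tilde X)^2\right]=\sigma_Y^2-2\hat\beta_n^\top\Sigma_{XY}+\hat\beta_n^\top\Sigma_{XX}\hat\beta_n=:g(\hat\beta_n).
\]

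Next I would treat the vanilla loss the same way. With the least squares intercept, $Y-h(X)=(Y-\bar Y)-\hat\beta_n^\top(X-\bar X)=(\tilde Y-\hat\beta_n^\top\tilde X)-(\bar{\tilde Y}-\hat\beta_n^\top\bar{\tilde X})$, where $\bar{\tilde X}=\bar X-\mu_X$ and $\bar{\tilde Y}=\bar Y-\mu_Y$. Squaring and taking $\mathbb{E}[\,\cdot\mid\hat\beta_n]$ produces three terms: the first is $g(\hat\beta_n)$ by the previous display; the cross term vanishes because, conditionally on $\hat\beta_n$, the test residual $\tilde Y-\hat\beta_n^\top\tilde X$ is independent of the training quantity $\bar{\tilde Y}-\hat\beta_n^\top\bar{\tilde X}$ and has conditional mean $\mathbb{E}_\mathcal{D}[\tilde Y]-\hat\beta_n^\top\mathbb{E}_\mathcal{D}[\tilde X]=0$. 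Hence $\mathbb{E}[e(h(X),Y)\mid\hat\beta_n]=g(\hat\beta_n)+\mathbb{E}[(\bar{\tilde Y}-\hat\beta_n^\top\bar{\tilde X})^2\mid\hat\beta_n]$, and it remains to identify the last term with $\tfrac1n g(\hat\beta_n)$.

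The key ingredient is the classical Gaussian fact that, for i.i.d.\ Gaussian vectors, the sample mean is independent of the collection of centered observations $\{X_i-\bar X,\,Y_i-\bar Y\}$, and therefore of any statistic built from them alone---in particular of $\hat\beta_n=\hat\Sigma_{XX}^{-1}\hat\Sigma_{XY}$. Consequently, conditioning on $\hat\beta_n$ does not alter the law of $(\bar{\tilde X},\bar{\tilde Y})$, which is $N(0,\Sigma/n)$ for $\Sigma$ the joint covariance of $(X,Y)$; so conditionally on $\hat\beta_n=b$ the scalar $\bar{\tilde Y}-b^\top\bar{\tilde X}$ is centered Gaussian with variance $\tfrac1n\big(\sigma_Y^2-2b^\top\Sigma_{XY}+b^\top\Sigma_{XX}b\big)=\tfrac1n g(b)$, which gives $\mathbb{E}[(\bar{\tilde Y}-\hat\beta_n^\top\bar{\tilde X})^2\mid\hat\beta_n]=\tfrac1n g(\hat\beta_n)$. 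Combining the pieces yields $\mathbb{E}[e(h(X),Y)\mid\hat\beta_n]=(1+\tfrac1n)g(\hat\beta_n)=(1+\tfrac1n)\,\mathbb{E}[e_1(h(X),Y)\mid\hat\beta_n]$, as claimed. I expect the main obstacle to be the rigorous handling of the independence structure: both the step that decouples the sample mean from $\hat\beta_n$---this is exactly where joint Gaussianity is indispensable and where the argument would break for general distributions---and the conditional-independence bookkeeping that kills the cross term while keeping $\hat\beta_n$ frozen inside the surviving quadratic forms.
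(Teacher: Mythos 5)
Your proposal is correct and follows essentially the same route as the paper: cancel the intercept in $e_1$ via the calibration constant, reduce $e$ to $((Y-\bar Y)-\hat\beta_n^\top(X-\bar X))^2$ using the OLS intercept, and invoke the joint-Gaussianity fact that $\hat\beta_n$ is independent of $(\bar X,\bar Y)$ so the extra sample-mean term contributes exactly $\tfrac1n g(\hat\beta_n)$. The only cosmetic difference is that you expand the square and kill the cross term explicitly, whereas the paper packages the same computation as the distributional identity $\{Y-\bar Y,\,X-\bar X\}\overset{d}{=}\sqrt{1+\tfrac1n}\,\{Y-\mathbb{E}[Y],\,X-\mathbb{E}[X]\}$.
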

Let $\hat{\alpha}_n$  be the learned intercept. Note that the original risk and the calibrated risk are 
\begin{align*}
R_e(h) &=  \mathbb{E}[e(h(X),Y)|\hat{\beta}_n,\hat{\alpha}_n], \text{ and} \\
R_{e_1}(h) &=  \mathbb{E}[e_1(h(X),Y)|\hat{\beta}_n,\hat{\alpha}_n]= \mathbb{E}[e_1(h(X),Y)|\hat{\beta}_n].
\end{align*}
Therefore, Theorem \ref{theorem} implies that $$(1+\frac{1}{n}) \mathbb{E}[R_{e_1}(h)]=\mathbb{E}[R_{e}(h)].$$ 
Furthermore, to make $e$ and $e_1$ comparable, we should scale $e_1$ to $(1+\frac{1}{n})e_1$. We demonstrate that after scaling, $(1+\frac{1}{n})R_{e_1}(h)$ has a smaller variance than $R_{e}(h)$ in the next corollary. In practice, as $(1+\frac{1}{n})$ is a constant as long as the training sample size is fixed, we can directly compare two pipelines using $R_{e_1}(h)$.

\begin{corollary}
\label{corollary}
Suppose that $h_1(x)$ and  $h_2(x)$ are two different learned linear functions in different feature sets. Then, we have
\begin{eqnarray}\mathbb{E}[R_e(h_1)] =\mathbb{E}[R_e(h_2)] 
 \Leftrightarrow \mathbb{E}[R_{e_1}(h_1)] =\mathbb{E}[R_{e_1}(h_2)]
\end{eqnarray}
and  
$$var\left(\left(1+\frac{1}{n}\right)R_{e_1}(h)\right)<var(R_{e}(h))$$ for any $h$ learned from linear regression.
\end{corollary}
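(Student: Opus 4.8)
The plan is to reduce the whole corollary to Theorem~\ref{theorem} together with the remark, made just below it, that the calibrated risk $R_{e_1}(h)$ depends on $h$ only through the learned slope $\hat{\beta}_n$, and then to apply the law of total variance.

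\textbf{Mean equivalence.} As already recorded in the text below Theorem~\ref{theorem}, taking expectations over $\hat{\beta}_n$ and using the tower rule gives $\left(1+\frac{1}{n}\right)\mathbb{E}[R_{e_1}(h)]=\mathbb{E}[R_{e}(h)]$ for a linear-regression pipeline trained on sample size $n$; the factor $1+\frac{1}{n}$ does not depend on which feature subset is used. Applying this to $h_1$ and $h_2$ (same $n$, hence the same positive constant) and dividing, we get $\mathbb{E}[R_e(h_1)]=\mathbb{E}[R_e(h_2)]\Leftrightarrow\mathbb{E}[R_{e_1}(h_1)]=\mathbb{E}[R_{e_1}(h_2)]$.

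\textbf{Variance bound.} The key step is the pointwise identity
\[
\left(1+\frac{1}{n}\right)R_{e_1}(h)=\mathbb{E}[R_e(h)\,|\,\hat{\beta}_n].
\]
Indeed, in the quadratic case the calibration offset $\mathbb{E}_{\mathcal{D}}[Y]-\mathbb{E}_{\mathcal{D}}[h(X)\,|\,h]$ exactly cancels the learned intercept $\hat{\alpha}_n$, so $R_{e_1}(h)=\mathbb{E}[e_1(h(X),Y)\,|\,\hat{\beta}_n]$ is $\hat{\beta}_n$-measurable; combining this with the conditional identity of Theorem~\ref{theorem} and then the tower rule (using $R_e(h)=\mathbb{E}[e(h(X),Y)\,|\,\hat{\beta}_n,\hat{\alpha}_n]$) yields the display. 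Conditioning $R_e(h)$ on $\hat{\beta}_n$ and applying the law of total variance,
\begin{align*}
var(R_e(h)) &= var\left(\mathbb{E}[R_e(h)\,|\,\hat{\beta}_n]\right)+\mathbb{E}\left[var(R_e(h)\,|\,\hat{\beta}_n)\right]\\
&= var\left(\left(1+\frac{1}{n}\right)R_{e_1}(h)\right)+\mathbb{E}\left[var(R_e(h)\,|\,\hat{\beta}_n)\right],
\end{align*}
so the claimed strict inequality is equivalent to $\mathbb{E}[var(R_e(h)\,|\,\hat{\beta}_n)]>0$, i.e. to the assertion that $R_e(h)$ is not almost surely a function of $\hat{\beta}_n$ alone.

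\textbf{Main obstacle.} Proving that strict positivity is the crux of the argument. The plan is to expand $R_e(h)=\mathbb{E}[(Y-\hat{\beta}_n^{\top}X-\hat{\alpha}_n)^2\,|\,\hat{\beta}_n,\hat{\alpha}_n]=v(\hat{\beta}_n)+\left(m(\hat{\beta}_n)-\hat{\alpha}_n\right)^2$, where $m(\hat{\beta}_n)=\mathbb{E}_{\mathcal{D}}[Y]-\hat{\beta}_n^{\top}\mathbb{E}_{\mathcal{D}}[X]$ and $v(\hat{\beta}_n)$ is the population variance of $Y-\hat{\beta}_n^{\top}X$ given $\hat{\beta}_n$, so that all of the residual $\hat{\alpha}_n$-dependence sits in $\left(m(\hat{\beta}_n)-\hat{\alpha}_n\right)^2$. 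It then remains to show that, conditionally on $\hat{\beta}_n$, the intercept $\hat{\alpha}_n=\bar{Y}-\hat{\beta}_n^{\top}\bar{X}$ is non-degenerate: for this I would invoke the standard Gaussian-sample fact (a Cochran/Basu-type argument) that the OLS slope $\hat{\beta}_n$ is a function of the centered sample and is therefore independent of the sample means $(\bar{X},\bar{Y})$, so the conditional law of $\hat{\alpha}_n$ given $\hat{\beta}_n$ is a genuinely non-degenerate Gaussian for every fixed finite $n$ (and every non-degenerate $\mathcal{D}$, i.e. one in which $Y$ is not an exact linear function of $X$). Consequently $\left(m(\hat{\beta}_n)-\hat{\alpha}_n\right)^2$ has strictly positive conditional variance, giving $\mathbb{E}[var(R_e(h)\,|\,\hat{\beta}_n)]>0$ and finishing the proof. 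The only delicate points are this slope/sample-mean independence and the explicit exclusion of degenerate $\mathcal{D}$ (plus a quick check that the relevant second moments exist, which holds in the Gaussian model for $n$ large enough); everything else is bookkeeping built on Theorem~\ref{theorem}.
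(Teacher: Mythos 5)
Your proposal is correct and follows essentially the same route as the paper: the mean claim is the tower rule applied to Theorem~\ref{theorem}, and the variance claim is the law of total variance applied to $R_e(h)$ conditioned on $\hat{\beta}_n$, with the conditional-mean term identified as $\left(1+\frac{1}{n}\right)R_{e_1}(h)$. The one place you go beyond the paper is the strictness of the inequality: the paper's proof simply writes $>$ after the total-variance decomposition without verifying that $\mathbb{E}\left[var\left(R_e(h)\,|\,\hat{\beta}_n\right)\right]$ is strictly positive, whereas you correctly isolate the residual $\hat{\alpha}_n$-dependence in the term $\left(m(\hat{\beta}_n)-\hat{\alpha}_n\right)^2$ and use the slope/sample-mean independence (the paper's Lemma~\ref{lemma:independence}) to show $\hat{\alpha}_n$ is conditionally non-degenerate Gaussian; this fills a genuine (if minor) gap, modulo the non-degeneracy caveat on $\mathcal{D}$ that you rightly flag and the paper leaves implicit.
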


Corollary \ref{corollary} indicates that Calibrated Quadratic Loss Metric has a smaller variance than vanilla Quadratic Loss Metric without changing the mean after appropriate scaling. Note that smaller variance and higher accuracy (Inequality \ref{ineq:comparison}) are highly correlated under mild conditions, but smaller variance alone does not guarantee higher accuracy. In the next section, we will empirically demonstrate that the new metric has a smaller variance and achieves higher accuracy. All proofs can be found in Appendix \ref{appendix:proof}.

\section{Experiment Results}
\label{exp}

\label{section:experiment}

\subsection{Estimation of Accuracy}

Recall that accuracy of a metric $\bar{e}$ is defined as:
\begin{equation*}
     \mathrm{Acc}(\bar{e}) \triangleq P({R}_{\bar{e}}(h_A) < {R}_{\bar{e}}(h_B)).
\end{equation*}
To get an estimation of $\mathrm{Acc}(\bar{e})$, we run pipelines $A$ and $B$ for $m$ times, obtaining models $h_{A_i}$ and $h_{B_i}$ for $i \in [m]$. $\mathrm{Acc}(\bar{e})$ can be estimated as:
\begin{equation}
    \mathrm{\widehat{Acc}}(\bar{e}) = \frac{1}{m^2} \sum_{(i, j)} \mathbbm{1}{(\hat{R}_{\bar{e}}(h_{A_i}, \mathcal{\hat{D}}_{\mathrm{test}}) < \hat{R}_{\bar{e}}(h_{B_j}, \mathcal{\hat{D}}_{\mathrm{test}}))}
\end{equation}
$\mathrm{\widehat{Acc}}(\bar{e})$ is an unbiased estimator of $\mathrm{Acc}(\bar{e})$, and in the experiments below, we report $\mathrm{\widehat{Acc}}(\bar{e})$ as our accuracy metric.  In all the tables in this section, without loss of generality, we write the tables as pipeline A is better than pipeline B in the sense of $P({R}_e(h_A) < {R}_e(h_B)) > 0.5$.

\subsection{Synthetic Data}
In Appendix \ref{appendix:experiments}, we consider a linear regression model to give empirical evidence to support our theory. We further consider logistic regression model to demonstrate the effectiveness of Calibrated Log Loss Metric in synthetic data setting. All the details and results can be found in Appendix \ref{appendix:experiments}.

\subsection{Avazu CTR Prediction dataset}

\textbf{Dataset} 
The Avazu CTR Prediction dataset \footnote{https://www.kaggle.com/c/avazu-ctr-prediction} is a common benchmark dataset for CTR predictions.
Due to computational constraints in our experiments, we use the first 10 million samples, shuffle the dataset randomly, and split the whole dataset into $80\%$ $\mathcal{\hat{D}}_\mathrm{train}$, $2\%$ $\mathcal{\hat{D}}_\mathrm{val-test}$ and $18\%$ $\mathcal{\hat{D}}_\mathrm{remaining-test}$.

\textbf{Metrics}
We compare the accuracy of Calibrated Log Loss Metric and Log Loss metric. We compute Calibrated Log Loss Metric as outlined in Algorithm \ref{algo:cali_free}. To make fair comparison, we compute Log Loss metric on $\mathcal{\hat{D}}_\mathrm{test}$ = $\mathcal{\hat{D}}_\mathrm{val-test}$ + $\mathcal{\hat{D}}_\mathrm{remaining-test}$.

\textbf{Base Model}
We use the xDeepFM model \citep{lian2018xdeepfm} open sourced in \citet{shen2017deepctr} as our base model. We primarily conduct experiments using xDeepFM models, including hyperparameter related experiments and feature related experiments. To demonstrate our new metric can also handle comparisons between different model architectures, we also conduct experiments using DCN \citep{wang2017deep}, DeepFM \citep{guo2017deepfm}, FNN \citep{zhang2016deep}, and DCNMix \citep{wang2021dcn}.

\textbf{Experiment Details} 
We consider neural networks with different architectures, different training methods, different hyper-parameters, and different levels of regularization as different pipelines. Such comparisons represent common practices for research and development in both industry and academia. 
For each pipeline, we train the model $60$ times with different initialization seeds and data orders to calculate $\mathrm{\widehat{Acc}}(\bar{e})$. Note that we use "Log Loss Metric" as our ground truth metric to determine the performance rank of different pipelines. Due to computational constraints, we cannot afford to run the experiments for multiple rounds. Instead, we run the experiments for one round and report accuracy. Note that in the neural network experiments, we do not re-sample the training data each time, as there is intrinsic randomness in the neural network training process. This is the main difference from the Linear Regression and Logistic Regression experiments.

\textbf{Pipelines with Different Number of Features} 
In this set of experiments, for pipeline $A$, we use all the features available. For pipeline $B$, we remove some informative features. We tested the removal of $6$ dense features and $1$ sparse features respectively. 

\begin{table}[!ht]
    \centering
    \caption{Accuracy of Log Loss Metric (LL) and Calibrated Log Loss Metric (CLL) under Neural Networks (features)}
    \begin{tabular}{llcc}
    \toprule
        Pipeline A & Pipeline B & LL Acc & CLL Acc \\
    \midrule
        Baseline & remove dense & 81.8\% & 88.8\% \\
        Baseline & remove sparse & 78.6\% & 85.9\% \\
    \bottomrule
    \end{tabular}
    \label{exp:nn_features}
\end{table}

\begin{table}[!ht]
    \centering
    \caption{Mean and Standard Deviation of Log Loss Metric (LL) and Calibrated Log Loss Metric (CLL) under Neural Networks (features)}
        \begin{tabular}{lcccc}
        \toprule
            Pipeline & LL Mean & CLL Mean & LL Std & CLL Std \\
        \midrule
            remove dense & 0.37408 & 0.37403 & $4.7 \times 10^{-4}$ & $3.8 \times 10^{-4}$ \\
            remove sparse & 0.37404 & 0.37398 & $5.0 \times 10^{-4}$ & $4.2 \times 10^{-4}$ \\
        \bottomrule
        \end{tabular}
    \label{exp:nn_arch_features}
\end{table}

From the result in Table \ref{exp:nn_features}, we can clearly see that Calibrated Log Loss Metric has a higher accuracy, indicating its effectiveness when comparing the performance of pipelines with different features. 

From the result in Table \ref{exp:nn_arch_features}, we can see that Calibrated Log Loss Metric has a smaller standard deviation (16\% - 19\% smaller) while the mean of Log Loss Metric and Calibrated Log Loss Metric is almost on par (within 0.02\% difference).

\textbf{Pipelines with Different Model Architectures} 
In this set of experiments, we aim to find out whether the new metric is able to detect modeling improvement from architecture changes. We tested a variety of different model architectures, including DCN \citep{wang2017deep}, DeepFM \citep{guo2017deepfm}, FNN \citep{zhang2016deep}, and DCNMix \citep{wang2021dcn}.

\begin{table}[!ht]
    \centering
    \caption{Accuracy of Log Loss Metric (LL) and Calibrated Log Loss Metric (CLL) under Neural Networks (model architectures)}
    \begin{tabular}{llcc}
    \toprule
        Pipeline A & Pipeline B & LL Acc & CLL Acc \\
        \midrule
        DCN & DCNMix & 64.4\% & 71.5\% \\ 
        DeepFM & DCN & 77.2\% & 83.9\% \\
        DeepFM & FNN & 76.9\% & 79.9\% \\ 
        FNN & DCNMix & 61.5\% & 72.0\% \\ 
        DeepFM & DCNMix & 84.8\% & 93.4\% \\ \bottomrule
    \end{tabular}
    \label{exp:nn_arch}
\end{table}
\begin{table}[ht]
    \centering
    \caption{Mean and Standard Deviation of Log Loss Metric (LL) and Calibrated Log Loss Metric (CLL) under Neural Networks (model architectures)}
        \begin{tabular}{lcccc}
        \toprule
            Pipeline & LL Mean & CLL Mean & LL Std & CLL Std \\
        \midrule
            DCN & 0.38021 & 0.38011 & $4.4 \times 10^{-4}$ & $3.3 \times 10^{-4}$ \\ 
            DeepFM & 0.37971 & 0.3796 & $5.9 \times 10^{-4}$ & $3.7 \times 10^{-4}$ \\ 
            FNN & 0.38029 & 0.38006 & $6.4 \times 10^{-4}$ & $4.0 \times 10^{-4}$ \\ 
            DCNMix & 0.38046 & 0.38037 & $4.8 \times 10^{-4}$ & $3.4 \times 10^{-4}$ \\
        \bottomrule
        \end{tabular}
    \label{exp:nn_arch_hyperparameters}
\end{table}
From the result in Table \ref{exp:nn_arch}, we can clearly see that Calibrated Log Loss Metric has higher accuracy, again indicating its effectiveness when comparing the performance of pipelines with different model architectures. In Table \ref{exp:nn_arch_hyperparameters}, we report the mean and standard deviation of Log Loss Metric and Calibrated Log Loss Metric, consistent with previous results.


\textbf{Pipelines with Different Model Hyperparameters}
In this set of experiments, we compare pipelines with different model hyperparameters, including neural network layer size, Batch Normalization (BN) \citep{ioffe2015batch}, Dropout \citep{srivastava2014dropout}, and regularization weight.

In the first experiment, we compare a pipeline using the baseline model size with a pipeline using a smaller model size. In the second experiment, we compare a pipeline using Batch Normalization with a pipeline not using Batch Normalization. In the third experiment, we compare a pipeline not using Dropout with a pipeline using Dropout with dropout probability $0.7$. In the fourth experiment, we compare a pipeline not using regularization with a pipeline using L2 regularization with regularization weight $10^{-6}$. 

\begin{table}[!ht]
    \caption{Accuracy of Log Loss Metric (LL) and Calibrated Log Loss Metric (CLL) under Neural Networks (hyperparameters)}
    \centering
        \begin{tabular}{llcc}
        \toprule
            Pipeline A & Pipeline B & LL Acc & CLL Acc \\ 
        \midrule
            Baseline Size & Smaller Size & 69.6\% & 73.6\% \\ 
            BN & no BN & 80.2\% & 89.7\% \\
            no Dropout & p = 0.7 & 95.0\% & 99.3\% \\
            no regularization & weight $10^{-6}$ & 95.2\% & 98.8\% \\
        \bottomrule
            
        \end{tabular}
        \label{exp:hyperparameters}
\end{table}

\begin{figure}[ht]
    \centering
    \subfloat[Log Loss Plot]{
        \includegraphics[width=6.3cm]{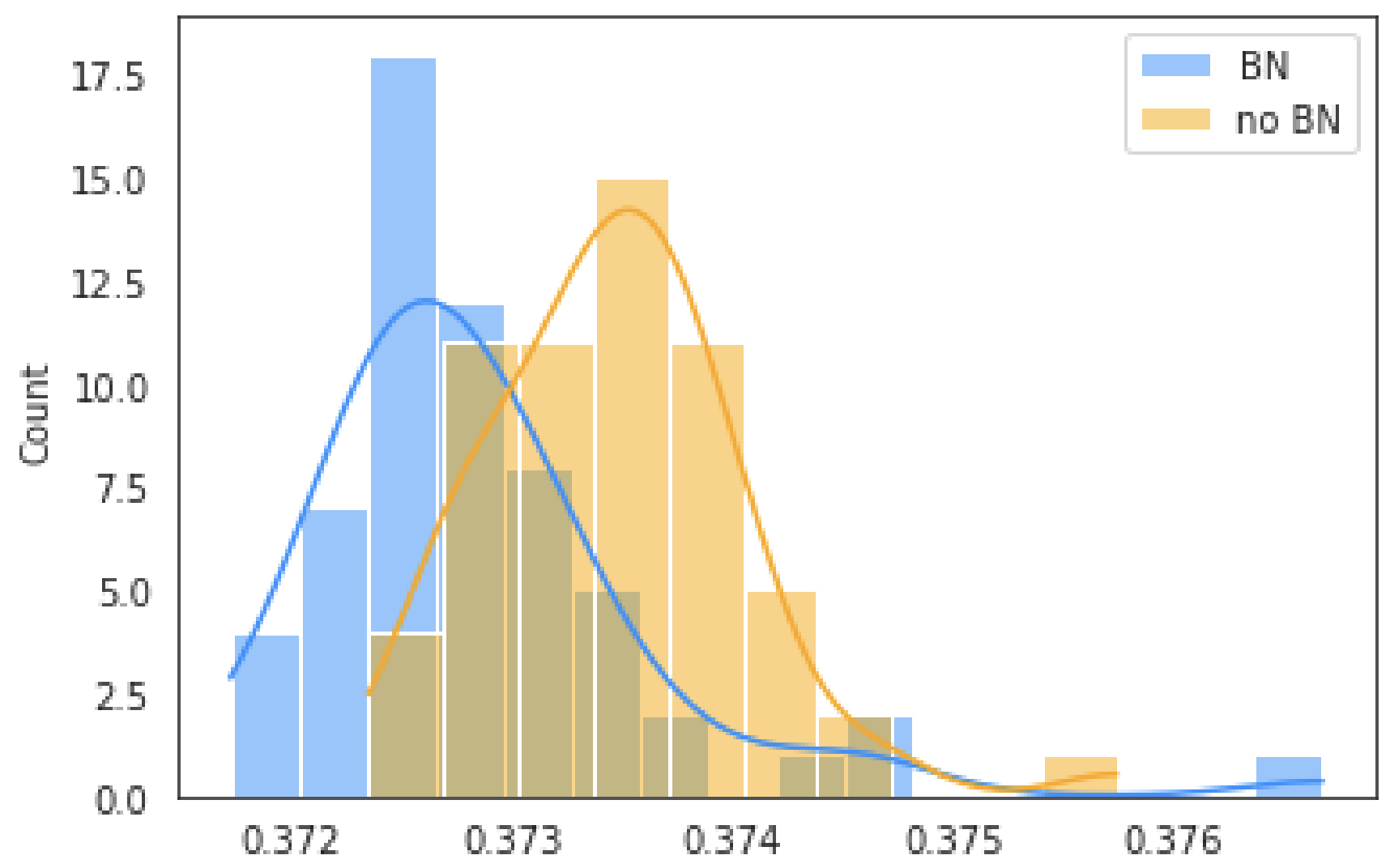}
    }
    \hfill
    \subfloat[Calibrated Log Loss Plot]{
        \includegraphics[width=6.3cm]{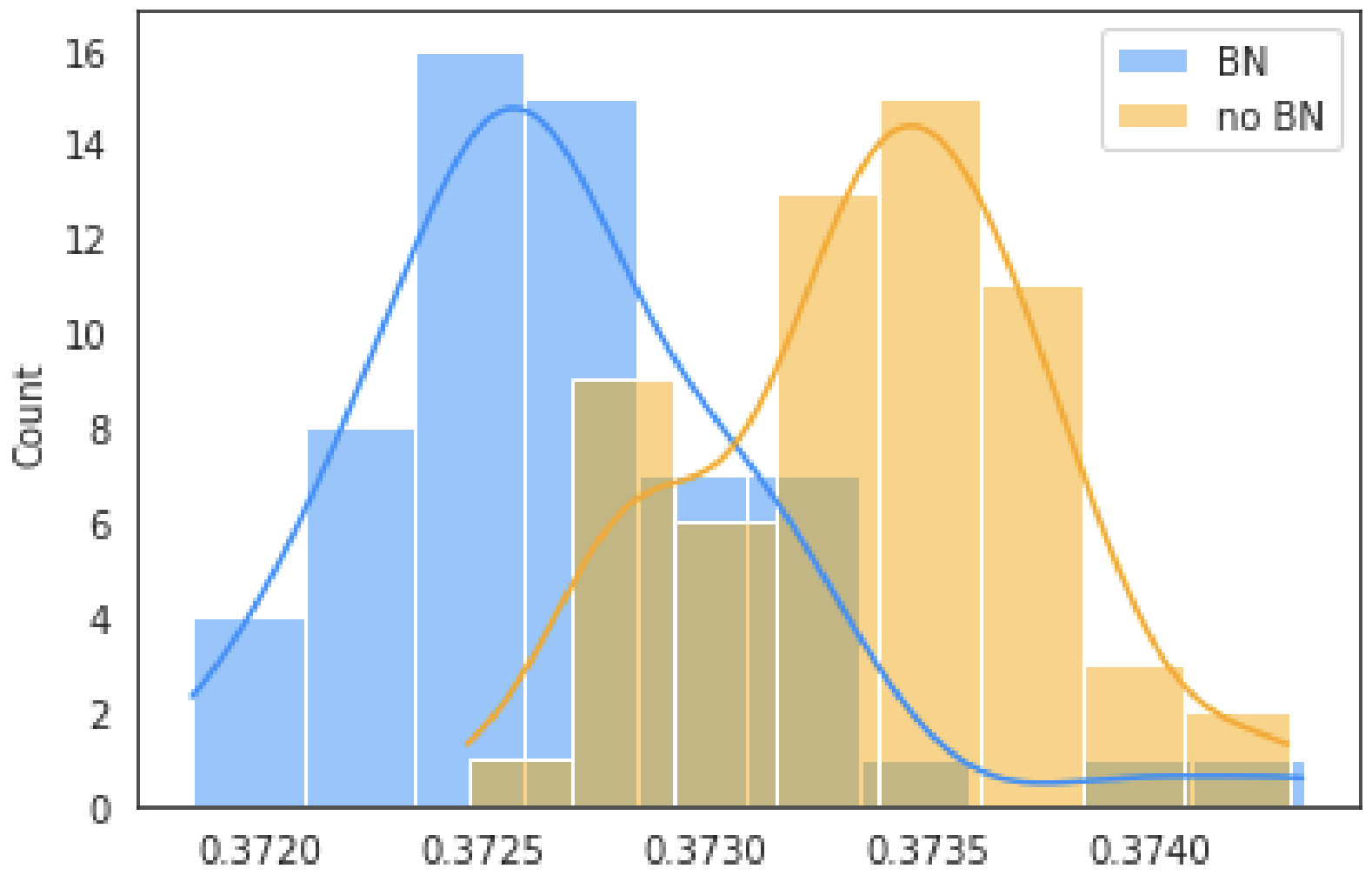}
    }
    \caption{Batch Normalization Experiment}
    \label{fig:bn}
\end{figure}

Figure \ref{fig:bn} illustrates the distribution of Log Loss Metric and Calibration Log Loss Metric observed in the Batch Normalization Experiments. We can clearly see that by using Calibrated Log Loss Metric, it becomes easier to separate pipeline using Batch Normalization from pipeline without Batch Normalization.

From the result in Table \ref{exp:hyperparameters}, we can see that Calibrated Log Loss Metric has a higher accuracy regardless of the hyperparameters we are tuning, indicating its effectiveness when comparing the performance of pipelines with different hyperparameters, which is a very common task in Deep Learning. In Appendix \ref{appendix:avazu} Table \ref{exp:nn_std_hyperparameters}, we report the mean and standard deviation of Log Loss Metric and Calibrated Log Loss Metric, again consistent with previous results.


\textbf{Pipelines with Different Levels of Regularization}
In this set of experiments, we take a closer look at one hyperparameter we conduct in the previous section: regularization weight. For pipeline $A$, we use the baseline model. For pipeline $B$, we use different L2 regularization weights. 

\begin{table}[!ht]
    \centering
    \caption{Accuracy of Log Loss Metric (LL) and Calibrated Log Loss Metric (CLL) under Neural Networks (regularization weight)}
    \begin{tabular}{llcc}
    \toprule     
    Pipeline A & Pipeline B & LL Acc & CLL Acc \\ 
    \midrule
        no regularization & weight $3 \times 10^{-7}$ & 63.2\% & 69.3\% \\
        no regularization & weight $5 \times 10^{-7}$ & 82.2\% & 88.2\% \\
        no regularization & weight $7 \times 10^{-7}$ & 86.6\% & 92.4\% \\
        no regularization & weight $1 \times 10^{-6}$ & 95.2\% & 98.8\% \\
        no regularization & weight $2 \times 10^{-6}$ & 98.8\% & 100.0\% \\
    \bottomrule
    \end{tabular}
    \label{exp:regularization}
\end{table}

From the result in Table \ref{exp:regularization}, we can see that Calibrated Log Loss Metric has a higher accuracy across all different regularization weights, indicating its robustness to different values of regularization weight. As we increase the regularization weight in pipeline $B$, the accuracies of both metrics increase. This is because pipeline $A$ and $B$ differ more with larger regularization weight, making performance comparison easier.

From the result in Appendix \ref{appendix:avazu} Table \ref{exp:nn_std}, we can see that Calibrated Log Loss Metric has a much smaller standard deviation (15\% - 40\% smaller) than Log Loss Metric while the mean of Log Loss Metric and Calibrated Log Loss Metric is almost on par (within 0.05\% difference), again consistent with previous results.

\section{Related Work}

In the realm of recommendation systems, accurately predicting Click-Through Rates (CTR) is paramount for effectively ranking items and achieving business objectives. Before the advent of deep learning technologies, traditional machine learning models such as logistic regression, boosted decision trees, and factorization machines were extensively employed for CTR predictions \citep{friedman2001greedy, koren2009matrix, rendle2010factorization, desrosiers2010comprehensive, canini2012sibyl}. These models laid the groundwork for understanding user interactions and preferences in various contexts.

With the Deep Learning Era, CTR prediction models underwent a significant transformation, embracing deep learning architectures to enhance performance and predictive capabilities \citep{cheng2016wide, guo2017deepfm, covington2016deep, wang2017deep, zhou2018deep, naumov2019deep, lian2018xdeepfm}. These deep models, typically comprising embedding layers followed by multilayer perceptrons (MLP), have set new benchmarks in accurately predicting user clicks and are now commonly employed across industries for personalized recommendation tasks, including ads and content recommendations. Initial deep models like Wide \& Deep \citep{cheng2016wide} and DeepFM \citep{guo2017deepfm} incorporated both wide linear models to memorize feature interactions and deep neural networks to generalize. Further innovations like xDeepFM \citep{lian2018xdeepfm} introduced compressed interaction networks to model high-order feature interactions. The Deep Interest Network \citep{zhou2018deep} also explicitly modeled the hierarchical structure of user interests. Beyond feedforward networks, graph neural networks \citep{gao2023survey} have become the new state-of-the-art approach to recommender systems.

Evaluating the performance of these CTR prediction models is crucial, with several metrics being commonly used for this purpose \citep{yi2013predictive}. The Area Under the ROC Curve (AUC) \citep{fawcett2006introduction, fawcett2004roc} and its variants \cite{zhu2017optimized}, alongside Log Loss, are among the most prevalent metrics in this domain. For example, \citet{he2014practical, wang2017deep, mcmahan2013ad} use Log Loss Metric as their core metric, while \citet{zhou2018deep, mcmahan2013ad} use AUC as their core metric. However, AUC has been criticized for not taking into account the predictive probability \citep{yi2013predictive}. Log Loss, in particular, is favored in scenarios requiring calibrated predictions due to its consideration of predictive probabilities, an aspect crucial for applications like Ads Recommendation Systems \citep{he2014practical}.

Our work contributes a new perspective on CTR evaluation by designing metrics to reduce variance. Most prior innovation has focused on creating new model architectures. By comparison, we aim to improve the evaluation process itself so that modeling advancements can be measured more precisely. Our metric framework is generally applicable to many existing CTR prediction models. By addressing the limitations of existing metrics and leveraging the strengths of deep learning models, our approach aims to refine the evaluation and optimization of recommendation systems further.

\section{Conclusion and Discussion}
\textbf{Conclusion}
In this paper, we have presented a new approach to comparing the performance of different deep learning pipelines. We proposed a new metric framework, Calibrated Loss Metric, which has a higher accuracy and smaller variance than its vanilla counterpart for a wide range of pipelines. Our experiments in section \ref{exp} demonstrated the superiority of Calibrated Loss Metric, and we believe this new metric can be used to more effectively and efficiently compare the performance of different pipelines in similar settings. Future work includes expanding this idea to evaluate NLP pipelines, and establish theoretical guarantees under more general settings.

\textbf{Limitations}
Our method sacrifices accuracy when comparing some specific pipelines. For example, if pipeline $B$ can reliably improve the model calibration in test distribution over pipeline $A$, Calibrated Log Loss Metric will not be able to correctly detect the benefits of pipeline B, while Log Loss Metric is able to. 
However, for most pipeline comparisons conducted in industry and academia like feature engineering, tuning parameters, etc., Calibrated Log Loss Metric has a huge accuracy boost over Log Loss Metric as we demonstrated in Section \ref{exp}.

\textbf{Potential Applications}
Our method may have applications in AutoML domain. AutoML (Automated Machine Learning) systems are designed to automate the process of selecting, designing, and tuning machine learning models, and a key component of these systems is the selection of the best-performing pipeline (e.g. hyperparameters, model architectures etc.). The new metric can be used as a more accurate way of comparing the performance and selecting the best one. The new metric is in particular useful when performing hyperparameter tuning.


\newpage
\appendix

\section{Proofs}
\label{appendix:proof}

\subsection{Proofs of Theorem \ref{theorem} and Corollary \ref{corollary}
}

\begin{lemma}
Suppose that $\hat{\beta}_{n}$ is the unique linear regression solution 
computed using the training data $\left\{ X_{i},Y_{i}\right\} _{i=1}^{n}.$ 
Then, $\hat{\beta}_{n}$ is independent to $\left\{ \bar{X},\bar{Y}\right\} ,$
where  
\begin{equation*}
\bar{X}=\frac{1}{n}\sum_{i=1}^{n}X_{i},\text{ and }\bar{Y}=\frac{1}{n}
\sum_{i=1}^{n}Y_{i}.
\end{equation*}
\label{lemma:independence}
\end{lemma}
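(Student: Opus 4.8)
The plan is to reduce the statement to the classical fact that, for i.i.d.\ Gaussian observations, the sample mean is independent of the deviations from the sample mean, and then to observe that the OLS slope $\hat{\beta}_n$ depends on the data only through those deviations.

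First I would record the explicit form of the slope. Minimizing $\sum_{i=1}^n (Y_i-\alpha-\beta^\top X_i)^2$ over $\alpha$ gives $\hat{\alpha}=\bar Y-\beta^\top\bar X$; substituting back reduces the problem to minimizing $\sum_{i=1}^n\big((Y_i-\bar Y)-\beta^\top(X_i-\bar X)\big)^2$, so that
\[
\hat{\beta}_n=\Big(\sum_{i=1}^n (X_i-\bar X)(X_i-\bar X)^\top\Big)^{-1}\sum_{i=1}^n (X_i-\bar X)(Y_i-\bar Y),
\]
the inverse existing by the assumed uniqueness of the solution. Hence $\hat{\beta}_n$ is a measurable function of the centered sample $\{(X_i-\bar X,\,Y_i-\bar Y)\}_{i=1}^n$ alone.

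Next, set $Z_i=(X_i^\top,Y_i)^\top$, so $(Z_1,\dots,Z_n)$ is jointly Gaussian (an i.i.d.\ stack of Gaussians), and therefore its linear image $(\bar Z,\,Z_1-\bar Z,\dots,Z_n-\bar Z)$ is jointly Gaussian as well. A one-line covariance computation, using independence across samples, gives $\mathrm{Cov}(\bar Z,\,Z_i-\bar Z)=\tfrac1n\Sigma-\tfrac1n\Sigma=0$ for every $i$, where $\Sigma=\mathrm{Cov}(Z_1)$. Since vanishing cross-covariance between jointly Gaussian blocks implies independence, $\bar Z=(\bar X,\bar Y)$ is independent of the whole collection $\{Z_i-\bar Z\}_{i=1}^n=\{(X_i-\bar X,\,Y_i-\bar Y)\}_{i=1}^n$, and thus of the function $\hat{\beta}_n$ of that collection.

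The only mildly delicate points are (i) justifying that the OLS slope really involves the data only through the centered quantities, which is exactly the profiling-out of the intercept above (equivalently Frisch--Waugh), and (ii) the measurability of the matrix inverse, which is fine on the full-probability event (guaranteed by the uniqueness hypothesis) that the centered Gram matrix is invertible. I do not expect a genuine obstacle beyond keeping these bookkeeping points straight; the core is the standard ``Gaussian, zero covariance $\Rightarrow$ independence'' argument.
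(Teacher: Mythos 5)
Your proof is correct and follows essentially the same route as the paper's: profile out the intercept so that $\hat{\beta}_n$ is a function of the centered sample $\{(X_i-\bar X,\,Y_i-\bar Y)\}_{i=1}^n$, then invoke joint Gaussianity plus vanishing cross-covariance to conclude independence from $(\bar X,\bar Y)$. Your version is in fact slightly tidier, since stacking $Z_i=(X_i^\top,Y_i)^\top$ and computing $\mathrm{Cov}(\bar Z, Z_i-\bar Z)=0$ covers all the cross-terms (e.g.\ between $X_i-\bar X$ and $\bar Y$) that the paper's covariance display leaves implicit.
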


\begin{proof}
It is well-known that $\hat{\beta}_{n}$ is the solution of the convex 
program  
\begin{equation*}
\min_{\beta ,c}\sum_{i=1}^{n}\left( Y_{i}-\beta ^{\top }X_{i}-c\right) ^{2},
\end{equation*}
which is equivalent to the convex program  
\begin{eqnarray}
&&\min_{\beta }\sum_{i=1}^{n}\left( Y_{i}-\beta ^{\top }X_{i}-\left( \bar{Y}
-\beta ^{\top }\bar{X}\right) \right) ^{2}  \label{convex:demean} \\
&=&\min_{\beta }\sum_{i=1}^{n}\left( \left( Y_{i}-\bar{Y}\right) -\beta
^{\top }\left( X_{i}-\bar{X}\right) \right) ^{2}.  \notag
\end{eqnarray}
Let $\tilde{Y}_{i}=Y_{i}-\bar{Y}$ and $\tilde{X}_{i}=Y_{i}-\bar{Y}.$ Note 
that $\tilde{Y}_{i}$ is independent to $\bar{Y}$ and $\tilde{X}_{i}$ is 
independent to $\bar{X}$ as  
\begin{equation*}
cov(Y_{i}-\bar{Y},\bar{Y})=0,cov(X_{i}-\bar{X},\bar{X})=0,
\end{equation*}
and $\left\{ X,Y\right\} $ are jointly normal. Note that the convex program
(\ref{convex:demean}) yields that $\hat{\beta}_{n}$ is a function of $%
\left\{  \tilde{X}_{i},\tilde{Y}_{i}\right\} _{i=1}^{n},$ which is
independent to $ \left\{ \bar{X},\bar{Y}\right\} .$
\end{proof}

\begin{theorem}
Suppose that the features $X\in \mathbb{R}^d$ and the label $Y$ are
distributed jointly Gaussian. We consider linear regression $h(x) = \beta
^\top x +\alpha $. Let $\hat{\beta}_n$ be the coefficient learned from the
training data with sample size $n$. Then, we have  
\begin{equation*}
\left(1+\frac{1}{n}\right)\mathbb{E}[e_1(h(X),Y)|\hat{\beta}_n ]= \mathbb{E}%
[e(h(X),Y)|\hat{\beta}_n],
\end{equation*}
where the expectation is taken over the randomness over both the training
and test samples.
\end{theorem}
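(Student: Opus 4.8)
The plan is to reduce both loss functions to squared residuals around the \emph{population} means and then exploit the independence structure among the learned slope, the training sample means, and the test point. First I would unpack the calibration term. Since $h(x)=\hat\beta_n^\top x+\hat\alpha_n$, we have $\mathbb{E}_{\mathcal D}[Y]-\mathbb{E}_{\mathcal D}[h(X)\mid h]=\mu_Y-\hat\beta_n^\top\mu_X-\hat\alpha_n$ with $\mu_X=\mathbb{E}[X]$, $\mu_Y=\mathbb{E}[Y]$; substituting into the definition of $e_1$ cancels $\hat\alpha_n$ and yields
\[
e_1(h(X),Y)=\big((Y-\mu_Y)-\hat\beta_n^\top(X-\mu_X)\big)^2=:U^2 .
\]
For $e$, I would use the elementary fact that the fitted intercept satisfies $\hat\alpha_n=\bar Y-\hat\beta_n^\top\bar X$, so that
\[
e(h(X),Y)=\big((Y-\bar Y)-\hat\beta_n^\top(X-\bar X)\big)^2=(U-V)^2,\qquad V:=(\bar Y-\mu_Y)-\hat\beta_n^\top(\bar X-\mu_X).
\]

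Next I would condition on $\hat\beta_n$ and expand $(U-V)^2=U^2-2UV+V^2$. Three facts do the work. (i) The test pair $(X,Y)$ is independent of the entire training sample, hence of $(\hat\beta_n,\bar X,\bar Y)$, so given $\hat\beta_n=b$ the variable $U$ has the law of $(Y-\mu_Y)-b^\top(X-\mu_X)$ under $\mathcal D$, with mean $0$. (ii) By Lemma \ref{lemma:independence}, $\hat\beta_n$ is independent of $(\bar X,\bar Y)$, so given $\hat\beta_n=b$ the variable $V$ keeps its unconditional law, again with mean $0$. (iii) Combining (i) and (ii), $U$ and $V$ are conditionally independent given $\hat\beta_n$, so $\mathbb{E}[UV\mid\hat\beta_n]=0$ and the cross term drops out.

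Finally I would compute the two surviving second moments. Writing $\sigma_b^2:=\mathrm{Var}(Y-b^\top X)$ under $\mathcal D$, fact (i) gives $\mathbb{E}[U^2\mid\hat\beta_n=b]=\sigma_b^2$, while fact (ii) together with $\bar Y-b^\top\bar X=\frac1n\sum_{i=1}^n(Y_i-b^\top X_i)$, an average of $n$ i.i.d.\ terms each of variance $\sigma_b^2$, gives $\mathbb{E}[V^2\mid\hat\beta_n=b]=\sigma_b^2/n$. Hence $\mathbb{E}[e\mid\hat\beta_n]=(1+\tfrac1n)\sigma_b^2=(1+\tfrac1n)\,\mathbb{E}[e_1\mid\hat\beta_n]$, which is exactly the claim. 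I expect the only genuine subtlety to be the careful bookkeeping of what ``conditioning on $\hat\beta_n$'' does to the laws of $(\bar X,\bar Y)$ and of the test point: this is precisely where the joint Gaussianity is essential, since without it Lemma \ref{lemma:independence} would only yield that $\hat\beta_n$ and $(\bar X,\bar Y)$ are uncorrelated, not independent, and the conditional-mean-zero and conditional-independence steps would fail.
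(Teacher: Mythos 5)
Your proposal is correct and follows essentially the same route as the paper: both reduce $e_1$ to the population-centered squared residual and $e$ to the training-mean-centered one via $\hat\alpha_n=\bar Y-\hat\beta_n^\top\bar X$, and both rely on the independence of $\hat\beta_n$ from $(\bar X,\bar Y)$ (Lemma \ref{lemma:independence}) plus the independence of the test point from the training sample. The only cosmetic difference is that you expand $(U-V)^2$ term by term, whereas the paper packages the same computation as the distributional identity $\{Y-\bar Y,\,X-\bar X\}\overset{d}{=}\sqrt{1+1/n}\,\{Y-\mathbb{E}[Y],\,X-\mathbb{E}[X]\}$; your closing remark correctly identifies where joint Gaussianity is actually used.
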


\begin{proof}
Note that the learned bias $\hat{\alpha}=\bar{Y}-\hat{\beta}_{n}^{\intercal }%
\bar{X},$ where $\bar{Y}$ and $\bar{X}$ are the empirical average of the
samples in the training set.  Then, The risks are defined as 
\begin{eqnarray*}
\mathbb{E}\left[ e(h(X),Y)|\hat{\beta}_{n}\right]  &=&\mathbb{E}\left[
\left( \left( Y-\bar{Y}\right) -\hat{\beta}_{n}^{\intercal }\left( X-\bar{X}%
\right) \right) ^{2}|\hat{\beta}_{n}\right] , \\
\mathbb{E}\left[ e_{1}(h(X),Y)|\hat{\beta}_{n}\right]  &=&\mathbb{E}\left[
\left( \left( Y-\mathbb{E}_{\mathcal{D}}[Y]\right) -\hat{\beta}%
_{n}^{\intercal }\left( X-\mathbb{E}_{\mathcal{D}}[X]\right) \right) ^{2}|%
\hat{\beta}_{n}\right] .
\end{eqnarray*}%
Therefore, we have 
\begin{equation*}
\mathbb{E}\left[ e_{1}(h(X),Y)|\hat{\beta}_{n}\right] =var(Y-\hat{\beta}%
_{n}^{\intercal }X|\hat{\beta}_{n}).
\end{equation*}%
Note that we have 
\begin{equation*}
\left\{ Y-\bar{Y},X-\bar{X}\right\} \overset{d}{=}\sqrt{1+\frac{1}{n}}%
\left\{ Y-\mathbb{E}_{\mathcal{D}}[Y],X-\mathbb{E}_{\mathcal{D}}[X]\right\} .
\end{equation*}%
given that $\left\{ Y,X\right\}$ is independent to $\left\{ \bar{Y},\bar{X}\right\}$. 
Recall for \ref{lemma:independence} that $\hat{\beta}_{n}$ is independent to $\left\{ \bar{Y},\bar{X}%
\right\} ,$ we have
\begin{eqnarray*}
\mathbb{E}\left[ e(h(X),Y)|\hat{\beta}_{n}\right]  &=&var\left( \left( Y-%
\bar{Y}\right) -\hat{\beta}_{n}^{\intercal }\left( X-\bar{X}\right) |\hat{%
\beta}_{n}\right)  \\
&=&\left( 1+\frac{1}{n}\right) var(Y-\hat{\beta}_{n}^{\intercal }X|\hat{\beta%
}_{n}).
\end{eqnarray*}
\end{proof}

\begin{corollary}
Suppose that $h_1(x)$ and $h_2(x)$ are two different learned linear
functions in different feature sets. Then, we have  
\begin{eqnarray}
\mathbb{E}[R_e(h_1)] =\mathbb{E}[R_e(h_2)] \Leftrightarrow \mathbb{E}%
[R_{e_1}(h_1)] =\mathbb{E}[R_{e_1}(h_2)]
\end{eqnarray}
and  $var(\left(1+{1}/{n}\right)R_{e_1}(h))<var(R_{e}(h))$ for any $h$
learned from linear regression.
\end{corollary}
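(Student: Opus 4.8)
The plan is to deduce both claims from Theorem~\ref{theorem}, Lemma~\ref{lemma:independence}, and the law of total variance. Throughout, $\hat\beta_n$ and $\hat\alpha_n$ denote the learned slope and intercept; the randomness in $R_e(h)$ and $R_{e_1}(h)$ is over the training sample. The two structural facts I will use repeatedly are: (i) $R_{e_1}(h)$ is $\sigma(\hat\beta_n)$-measurable, because the learned intercept cancels inside $e_1$ (the population-mean centering absorbs $\hat\alpha_n$), so $R_{e_1}(h)=\mathbb{E}[e_1(h(X),Y)\mid\hat\beta_n,\hat\alpha_n]=\mathbb{E}[e_1(h(X),Y)\mid\hat\beta_n]$; and (ii) by the tower property $\mathbb{E}[R_e(h)\mid\hat\beta_n]=\mathbb{E}\big[\mathbb{E}[e(h(X),Y)\mid\hat\beta_n,\hat\alpha_n]\mid\hat\beta_n\big]=\mathbb{E}[e(h(X),Y)\mid\hat\beta_n]$, which is precisely the right-hand side appearing in Theorem~\ref{theorem}. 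Combining (i), (ii), and the theorem gives the key identity $(1+1/n)\,R_{e_1}(h)=\mathbb{E}[R_e(h)\mid\hat\beta_n]$.

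For the mean equivalence, I would note that Theorem~\ref{theorem} applies verbatim to each of $h_1$ and $h_2$: any subvector of a jointly Gaussian vector is jointly Gaussian, so each learned linear function on its own feature set satisfies the hypotheses, and both pipelines use the same training sample size $n$. Taking expectations over $\hat\beta_n$ in the identity above yields $\mathbb{E}[R_e(h_i)]=(1+1/n)\,\mathbb{E}[R_{e_1}(h_i)]$ for $i=1,2$, and since $1+1/n>0$ is the same constant for both, $\mathbb{E}[R_e(h_1)]=\mathbb{E}[R_e(h_2)]\Leftrightarrow\mathbb{E}[R_{e_1}(h_1)]=\mathbb{E}[R_{e_1}(h_2)]$ follows immediately. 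This part is pure bookkeeping.

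For the variance inequality (for a single generic $h$), I apply the law of total variance together with the key identity:
$$\mathrm{var}(R_e(h))=\mathrm{var}\big(\mathbb{E}[R_e(h)\mid\hat\beta_n]\big)+\mathbb{E}\big[\mathrm{var}(R_e(h)\mid\hat\beta_n)\big]=\mathrm{var}\big((1+1/n)R_{e_1}(h)\big)+\mathbb{E}\big[\mathrm{var}(R_e(h)\mid\hat\beta_n)\big],$$
so it remains to show the residual term is strictly positive. I would make this concrete by expanding the test-set MSE into variance plus squared bias: using $\hat\alpha_n=\bar Y-\hat\beta_n^\top\bar X$ one obtains $R_e(h)=R_{e_1}(h)+W^2$ with $W=(\bar Y-\mathbb{E}_{\mathcal{D}}[Y])-\hat\beta_n^\top(\bar X-\mathbb{E}_{\mathcal{D}}[X])$. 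Since $R_{e_1}(h)$ is $\sigma(\hat\beta_n)$-measurable, $\mathrm{var}(R_e(h)\mid\hat\beta_n)=\mathrm{var}(W^2\mid\hat\beta_n)$. By Lemma~\ref{lemma:independence}, $(\bar X,\bar Y)$ is independent of $\hat\beta_n$ and jointly Gaussian, so conditionally on $\hat\beta_n$ the variable $W$ is a centered Gaussian with variance $\tfrac1n\,\mathrm{var}(Y-\hat\beta_n^\top X\mid\hat\beta_n)$; whenever this residual variance is positive (the non-degenerate case, which holds for any non-degenerate Gaussian $(X,Y)$), $W^2$ is non-constant, hence $\mathrm{var}(W^2\mid\hat\beta_n)>0$ almost surely and the residual term is strictly positive. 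Thus $\mathrm{var}((1+1/n)R_{e_1}(h))<\mathrm{var}(R_e(h))$.

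I expect the only delicate point to be the conditioning bookkeeping in steps (i) and (ii): being careful that conditioning $e_1$ on $(\hat\beta_n,\hat\alpha_n)$ genuinely reduces to conditioning on $\hat\beta_n$ alone, and that the left-hand side of Theorem~\ref{theorem} integrates out the training-intercept randomness in exactly the way that makes it equal to $\mathbb{E}[R_e(h)\mid\hat\beta_n]$ rather than to $R_e(h)$ itself. Once that identification is in place, both claims fall out of the law of total variance, with a one-line non-degeneracy remark securing the strict inequality.
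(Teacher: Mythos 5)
Your proposal is correct and follows essentially the same route as the paper: the identity $(1+1/n)R_{e_1}(h)=\mathbb{E}[R_e(h)\mid\hat\beta_n]$, the tower property for the mean equivalence, and the law of total variance conditioning on $\hat\beta_n$ for the variance inequality. The one place you go beyond the paper is in justifying strictness: the paper simply asserts that the residual term $\mathbb{E}[\mathrm{var}(R_e(h)\mid\hat\beta_n)]$ is positive, whereas your decomposition $R_e(h)=R_{e_1}(h)+W^2$ with $W$ conditionally centered Gaussian of variance $\tfrac1n\,\mathrm{var}(Y-\hat\beta_n^\top X\mid\hat\beta_n)$ supplies the missing non-degeneracy argument.
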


\begin{proof}
From the definition, we see 
\begin{eqnarray*}
\mathbb{E}[R_{e}(h)] &=&\mathbb{E}\left[ \mathbb{E}\left[ e(h(X),Y)|\hat{%
\beta}_{n}\right] \right] , \\
\mathbb{E}[R_{e_{1}}(h)] &=&\mathbb{E}\left[ \mathbb{E}\left[ e_{1}(h(X),Y)|%
\hat{\beta}_{n}\right] \right] .
\end{eqnarray*}%
Therefore, we conclude the first claim.

For the second claim, note that 
\begin{eqnarray*}
R_{e_{1}}(h) &=&\mathbb{E}\left[ e_{1}(h(X),Y)|\hat{\beta}_{n}\right] , \\
R_{e}(h) &=&\mathbb{E}\left[ e_{1}(h(X),Y)|\hat{\beta}_{n},\bar{X},\bar{Y}%
\right] .
\end{eqnarray*}%
Then, the variance of $R_{e}(h)$ can be decomposed as 

\begin{align*}
var(R_{e}(h)) &= var\left( \mathbb{E}\left[ e(h(X),Y)|\hat{\beta}_{n}\right] \right) \\
&\quad +\mathbb{E}\left[ var\left( \mathbb{E}\left[ e(h(X),Y)|\hat{\beta}_{n}^{\intercal },\bar{X},\bar{Y}\right] |\hat{\beta}_{n}\right) \right]  \\
&> var\left( \mathbb{E}\left[ e(h(X),Y)|\hat{\beta}_{n}\right] \right)  \\
&= var\left( \left( 1+\frac{1}{n}\right) \mathbb{E}\left[ e_{1}(h(X),Y)|\hat{\beta}_{n}\right] \right)  \\
&= var\left( \left( 1+\frac{1}{n}\right) R_{e_{1}}(h)\right) .
\end{align*}
\end{proof}

\section{Experiments}
\label{appendix:experiments}

\subsection{Synthetic Data: Linear Regression}

We consider a linear regression model in this section to give empirical evidence to support our theory. We assume the response $Y$ follows the following generating process:
\begin{equation}
    Y = \beta^\top X + \epsilon,
\end{equation}
where $\epsilon \sim \mathcal{N}(\mu_{\mathrm{e}},\Sigma_{\mathrm{e}})$ and $\beta,X\in \mathbb{R}^d$.

In the experiments, we consider $d=20$, $\beta = [1,1,\ldots,1]^\top$,  and $X\sim\mathcal{N}(\mu_{\mathcal{D}},\Sigma_{\mathcal{D}})$ in both the training set and the test set.  In the training set, we generate $N_{\mathrm{train}}=1000$ i.i.d. training samples to train a linear regression model. In the test set, we generate $N_{\mathrm{test}}=11000$ i.i.d. test samples, with $N_{\mathrm{val-test}}=1000$ and $N_{\mathrm{remaining-test}}=10000$.

We assume
$\mu_{\mathcal{D}} =
  [-0.05,-0.05,\ldots,-0.05]^\top,
 $
  $\Sigma_{\mathcal{D}} =0.25^2\times I_{d\times d}$, $\mu_{\mathrm{e}} = 1$ and $\Sigma_{\mathrm{e}} = 2$.

Note that there is no randomness in the training process of Linear Regression, as it's a convex optimization program. The randomness of Linear Regression comes from the training data. In order to run pipelines $A$ and $B$ multiple times to estimate the metric accuracy, we re-sample training data each time from the ground truth data distribution. 

For pipeline $A$, we use all the $20$ features available, and for pipeline $B$, we use the first $19$ features and leave the last feature out. It's clear that pipeline $A$ should perform better than pipeline $B$ in the ground truth.

For each round of experiments, we run pipelines A and B for $100$ times and report accuracy $\mathrm{\widehat{Acc}}(\bar{e})$ in the table \ref{exp:linear_regression}. We performed $20$ rounds of experiments, and report the mean and the standard errors of $\mathrm{\widehat{Acc}}(\bar{e})$ in Table \ref{exp:linear_regression}. We also calculate the standard deviation and mean of Quadratic Loss~(QL) Metric and Calibrated Quadratic Loss~(CQL) Metric from pipeline A in each round of experiments, and report the average in Table \ref{exp:linear_regression_std}.
\begin{table}[h]
    \centering
    \caption{Accuracy of Quadratic Loss Metric (QL) and Calibrated Quadratic Loss (CQL) Metric under Linear Regression}
        \begin{tabular}{cc|cc}
        \toprule
            \multicolumn{2}{c|}{\# of feature } & \multicolumn{2}{c}{Accuracy } \\
        \midrule
            Pipeline A & Pipeline B & QL & CQL \\
        \midrule
            20 & 19 & 93.49\% $\pm$ 0.35\% & 95.81\% $\pm$ 0.28\% \\
        \bottomrule
        \end{tabular}
    \label{exp:linear_regression}
\end{table}

\begin{table}[h]
    \centering
    \caption{Mean and Standard Deviation of Quadratic Loss Metric (QL) and Calibrated Quadratic Loss (CQL) Metric under Linear Regression}
        \begin{tabular}{cccc}
        \toprule
            QL Mean & CQL Mean & QL Std & CQL Std \\
        \midrule
            4.067 & 4.070 & 0.0295 & 0.0286 \\
        \bottomrule
        \end{tabular}
    \label{exp:linear_regression_std}
\end{table}

From the result in Table \ref{exp:linear_regression}, we can see that Calibrated Quadratic Loss Metric has a higher accuracy compared with Quadratic Loss Metric. From the result in Table \ref{exp:linear_regression_std}, we can see that Calibrated Quadratic Loss Metric indeed has a smaller standard deviation (3.1\% smaller) than Quadratic Loss Metric while the mean of Quadratic Loss Metric and Calibrated Quadratic Loss Metric is almost on par (0.07\% difference).

\subsection{Synthetic Data: Logistic Regression}

We consider a logistic regression model. We assume the response $Y$ follows the Bernoulli distribution with probability $\left(1+\exp(-\beta^\top X)\right)^{-1}$, for $\beta,X\in \mathbb{R}^d$.

In the experiments, we consider $d=20$, $\beta = [1,1,\ldots,1]^\top$, and $X\sim\mathcal{N}(\mu_{\mathcal{D}},\Sigma_{\mathcal{D}})$ in both the training and test sets.  In the training set, we generate $N_{\mathrm{train}}=1000$ i.i.d. training samples to train a logistic regression model. In the test set, we generate $N_{\mathrm{test}}=12000$ i.i.d. test samples, with $N_{\mathrm{val-test}}=2000$ and $N_{\mathrm{remaining-test}}=10000$.

We assume
  $\mu_{\mathcal{D}} =
  [-0.05,-0.05,\ldots,-0.05]^\top
$
  and $\Sigma_{\mathcal{D}}=0.25^2\times I_{d\times d}$.

Note that similar to Linear Regression, there is no randomness in the training process of Logistic Regression as well, as it's a convex optimization program. The randomness of Logistic Regression comes from the training data. We employ the same strategy to estimate the metric accuracy, i.e. we re-sample training data each time from the ground truth data distribution. 

For pipeline $A$, we use all the $20$ available features, and for pipeline $B$, we use the first $19$ features and leave the last feature out. It's clear that pipeline $A$ should perform better than pipeline $B$ in the ground truth.

For each round of experiments, we run pipelines A and B for $1000$ times and report accuracy $\mathrm{\widehat{Acc}}(\bar{e})$ in the table \ref{exp:lr}. We performed $20$ rounds of experiments, and report the mean and the standard errors of $\mathrm{\widehat{Acc}}(\bar{e})$ in Table \ref{exp:lr}. We also calculate the standard deviation and mean of Log Loss~(LL) Metric and Calibrated Log Loss~(CLL) Metric from pipeline A in each round of experiments, and report the average in Table \ref{exp:lr_std}.

\begin{table}[h]
    \centering
    \caption{Accuracy of Log Loss Metric (LL) and Calibrated Log Loss Metric (CLL) under Logistic Regression}
        \begin{tabular}{cc|cc}
        \toprule
            \multicolumn{2}{c|}{\# of feature } & \multicolumn{2}{c}{Accuracy } \\
        \midrule
            Pipeline A & Pipeline B & LL & CLL \\
        \midrule
            20 & 19 & 79.62\% $\pm$ 0.18\% & 83.7\% $\pm$ 0.15\% \\
        \bottomrule
        \end{tabular}
    \label{exp:lr}
\end{table}

\begin{table}[h]
    \centering
    \caption{Mean and Standard Deviation of Log Loss Metric (LL) and Calibrated Log Loss Metric (CLL) under Logistic Regression}
        \begin{tabular}{cccc}
        \toprule
            LL Mean & CLL Mean & LL Std & CLL Std \\
        \midrule
            0.5366 & 0.5343 & 0.00385 & 0.00370 \\
        \bottomrule
        \end{tabular}
    \label{exp:lr_std}
\end{table}

From the result in Table \ref{exp:lr}, we can clearly see that Calibrated Log Loss Metric has a huge accuracy boost compared with Log Loss Metric. From the result in Table \ref{exp:lr_std}, we can see that Calibrated Log Loss Metric indeed has a smaller standard deviation (3.9\% smaller) than Log Loss Metric while the mean of Log Loss Metric and Calibrated Log Loss Metric is almost on par (0.43\% difference).

\subsection{Avazu CTR Prediction dataset}
\label{appendix:avazu}
We report the mean and standard deviation of Log Loss and Calibrated Log Loss for additional Avazu CTR Prediction dataset experiments. 

\begin{table}[!ht]
    \centering
    \caption{Mean and Standard Deviation of Log Loss Metric (LL) and Calibrated Log Loss Metric (CLL) under Neural Networks (hyperparameters)}
        \begin{tabular}{lcccc}
        \toprule
            Pipeline & LL Mean & CLL Mean & LL Std & CLL Std \\
        \midrule
            Baseline & 0.37347 & 0.37338 & $5.8 \times 10^{-4}$ & $3.8 \times 10^{-4}$ \\
            smaller size & 0.37383 & 0.37374 & $4.8 \times 10^{-4}$ & $3.9 \times 10^{-4}$ \\
            BN & 0.37286 & 0.37268 & $7.8 \times 10^{-4}$ & $4.4 \times 10^{-4}$ \\ 
            Dropout & 0.37454 & 0.37456 & $3.9 \times 10^{-4}$ & $3.6 \times 10^{-4}$ \\
            Regularization & 0.37475 & 0.37459 & $6.1 \times 10^{-4}$ & $4.2 \times 10^{-4}$ \\
        \bottomrule
        \end{tabular}
    \label{exp:nn_std_hyperparameters}
\end{table}

\begin{table}[!ht]
    \centering
    \caption{Mean and Standard Deviation of Log Loss Metric (LL) and Calibrated Log Loss Metric (CLL) under Neural Networks (regularization weight)}
        \begin{tabular}{lcccc}
        \toprule
            Pipeline & LL Mean & CLL Mean & LL Std & CLL Std \\
        \midrule
            0 & 0.37347 & 0.37338 & $5.8 \times 10^{-4}$ & $3.8 \times 10^{-4}$ \\
            $3 \times 10^{-7}$ & 0.37371 & 0.37369 & $4.8 \times 10^{-4}$ & $4.3 \times 10^{-4}$ \\
            $5 \times 10^{-7}$ & 0.37419 & 0.37411 & $5.9 \times 10^{-4}$ & $5.0 \times 10^{-4}$ \\ 
            $7 \times 10^{-7}$ & 0.37428 & 0.37421 & $5.7 \times 10^{-4}$ & $4.4 \times 10^{-4}$ \\ 
            $1 \times 10^{-6}$ & 0.37475 & 0.37459 & $6.1 \times 10^{-4}$ & $4.2 \times 10^{-4}$ \\ 
            $2 \times 10^{-6}$ & 0.37562 & 0.37547 & $5.9 \times 10^{-4}$ & $4.2 \times 10^{-4}$ \\ 
        \bottomrule
        \end{tabular}
    \label{exp:nn_std}
\end{table}

\bibliography{references}

\end{document}